%% file: main.tex
\pdfoutput=1
\documentclass[11pt]{article}

\input{macros}

\usepackage{tikz}
\usepackage{pgfplots}

\usetikzlibrary{arrows.meta}
\usetikzlibrary{decorations.pathreplacing}

\title{The Tractability Landscape of Sampling with Inexact Scores}
\author{
Anming Gu\thanks{University of Texas at Austin, \texttt{anminggu@cs.utexas.edu}}
\and 
Kevin Tian\thanks{University of Texas at Austin, \texttt{kjtian@cs.utexas.edu}}
\and
Hubert Yang\thanks{University of Texas at Austin, \texttt{hubert.yang@utexas.edu}}
\and
Yusong Zhu\thanks{University of Texas at Austin, \texttt{zhuys@utexas.edu}}
}
\date{}

\begin{document}
\allowdisplaybreaks
\maketitle

\begin{abstract}
We provide a simple and tight characterization of the types of inexact score oracle access that permit sampling with vanishing total variation bias, for a standard, well-behaved target family. Our main result shows that any weaker error than the sub-Gaussian assumption used by \cite{YangW26} rules out the tractability of unbiased sampling. This strengthens the conclusion of  \cite{CaoCSW26} to be \emph{algorithm-agnostic}, and to hold for a wider range of error assumptions.
\end{abstract}

\section{Introduction}

The development of an \emph{inexact oracle complexity theory} has been central to our understanding of stochastic optimization \cite{NemirovskyY83, AgarwalBRW12}. We consider an analogous question in sampling.

\begin{problem}\label{prob:main}
    Given a distribution $\pi$ on $\R^d$ belonging to a well-behaved class, and oracle access to $\vs: \R^d \to \R^d$ 
    that approximates $\vs_\pi \defeq \nabla \log \pi$, under what error assumptions on $(\vs, \pi)$ is it possible to sample to within arbitrary TV distance of $\pi$, as we take the error bound to $0$?
\end{problem}

We focus on the canonical family of $\pi$ that are log-smooth (i.e., $\vs_\pi$ is Lipschitz), and that satisfy
a \emph{log-Sobolev inequality} (LSI).\footnote{The LSI is a robust form of isoperimetry that is often taken to admit tractability of sampling algorithms \cite{VempalaW19}, and applies to (but is not limited to) strongly log-concave distributions \cite{BakryE85}.} For this target family, \cite{YangW26} shows that a \emph{sub-Gaussian score error} (i.e., $\beta = 2$ in Definition~\ref{def:mgf_error}) with appropriate parameters is sufficient to solve Problem~\ref{prob:main}.

Our main finding is that the error condition assumed by \cite{YangW26} is necessary: any qualitatively weaker condition results in intractability, even for log-smooth, LSI targets.

\begin{theorem}[Informal, see Theorem~\ref{thm:main_full}]\label{thm:main}
Standard weaker variants of the assumption used in \cite{YangW26}, e.g., bounded moments (Definition~\ref{def:lp_error}) or sub-Weibull error (Definition~\ref{def:mgf_error} with $\beta \in (0, 2)$) make sampling to vanishing $\TV$ impossible, even as the error bound goes to $0$. Moreover, the parameters used in the sub-Gaussian error assumption in \cite{YangW26} are also tight up to constant factors.
\end{theorem}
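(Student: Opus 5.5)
The plan is an indistinguishability (two-point) argument. I will construct two log-smooth, LSI targets $\pi_0$ and $\pi_1$ with $\TV(\pi_0,\pi_1)\ge c$ for an absolute constant $c>0$, together with a \emph{single} score field $\vs$. This $\vs$ must satisfy the weaker error condition (bounded $L^p$ moments as in Definition~\ref{def:lp_error}, or the sub-Weibull condition of Definition~\ref{def:mgf_error} with $\beta<2$) with error parameter $\epsilon$ relative to \emph{both} targets simultaneously. Because the oracle answers are then identical under either hypothesis, any algorithm, randomized or not, produces the same output law $\mu$. By the triangle inequality, $\max_i \TV(\mu,\pi_i)\ge c/2$, and this holds no matter how small $\epsilon$ is. This is what makes the conclusion algorithm-agnostic.

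\textbf{Step 1: the construction.} Take $\pi_0=\mathcal N(0,I)$, or in one dimension $\mathcal N(0,1)$, since the lower bound can live in a single coordinate. Let $\pi_1\propto \pi_0 e^{-f}$, where $f$ is a smooth bump that places a constant fraction of mass differently, far in the tail. Concretely, on a shell or interval at radius $R$, $f$ lowers or raises the density by a factor comparable to $e^{\pm R^2/2}$ times a constant, so that roughly constant mass moves to radius $\approx R$. The perturbation must be spread over width $\gtrsim R$ so that $\nabla f$ has Lipschitz constant $O(1)$; this keeps log-smoothness. Bounded perturbation \`a la Holley--Stroock fails here, because $\|f\|_\infty$ is large. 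Instead, I will keep LSI by making $\pi_1$ a log-concave perturbation, or by choosing $f$ convex-compatible so that $\pi_1$ remains, say, $\tfrac12$-strongly log-concave, which gives LSI via Bakry--\'Emery. A cleaner alternative is the pair $\pi_1=\mathcal N(m,I)$ with $|m|=\Theta(1)$, with the score $\vs$ chosen to equal $\vs_{\pi_0}$ on most of space. I will settle on whichever yields the cleanest error computation.

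\textbf{Step 2: the oracle and the error bounds.} Set $\vs=\vs_{\pi_0}$ everywhere. The error under $\pi_0$ is then zero, and the error under $\pi_1$ is $\nabla f$, which is of order $R$ but supported on a region $A$ with $\pi_1(A)=\delta$. The subtlety is that this region must have small $\pi_1$-mass while still making $\TV(\pi_0,\pi_1)$ large. I will therefore symmetrize: use a third intermediate oracle, or let $\vs$ agree with $\vs_{\pi_0}$ where $\pi_0$ has mass and with $\vs_{\pi_1}$ where $\pi_1$ has mass. The discrepancy is then confined to a transition region $A$ carrying mass $\delta$ under both targets, with error magnitude $\approx R$ there. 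The $L^p$ error is $\approx R\,\delta^{1/p}$, which tends to $0$ by taking $\delta\to0$ with $R$ fixed or growing slowly. For sub-Weibull error, the Orlicz norm of an error of size $R$ on a set of mass $\delta$ is $\approx R/\log(1/\delta)^{1/\beta}$. In a Gaussian-like construction the transition region at distance $R$ has mass $\delta\approx e^{-\Theta(R^2)}$, which gives norm $\approx R^{1-2/\beta}\to0$ exactly when $\beta<2$. At $\beta=2$ this is $\Theta(1)$, and that boundary computation also yields the tightness of the sub-Gaussian parameters in \cite{YangW26}.

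\textbf{Main obstacle.} The hard part is making the transition region simultaneously (i) thin in probability under both targets, (ii) crossable with $O(1)$-Lipschitz scores, and (iii) compatible with a constant-TV separation while both measures retain an $O(1)$ LSI constant. My first attempt will be a one-dimensional mixture-free design: $\pi_1$ equals $\pi_0$ reweighted by $e^{\pm a x}$ only for $|x|\ge R$, so that the scores differ by a bounded-slope linear term in the tails. I will then tune $a$ and $R$ as functions of $\epsilon$ and verify LSI by a perturbation lemma for log-concave tails.
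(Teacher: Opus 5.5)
Your two-point skeleton matches the paper's. A single oracle that is $\eps$-accurate for two targets at constant $\TV$ distance makes every algorithm's output law identical under both, and the triangle inequality finishes. Your Step~2 accounting is also the right computation: error of size $\approx R$ on a set of mass $e^{-\Theta(R^2)}$, so that $e^{-\Theta(R^2)+rR^\beta}\to 0$ iff $\beta<2$.

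The gap is that you never fix a pair of targets meeting your (i)--(iii), and each concrete candidate you name fails.
\begin{itemize}
\item The ``cleaner alternative'' $\pi_1=\mathcal N(m,1)$ with $|m|=\Theta(1)$ is impossible. For every $s$ and $x$, $\max(|s+x|,|s+x-m|)\ge |m|/2$, hence
\[
\E_{\pi_0}[|s-s_{\pi_0}|^p]+\E_{\pi_1}[|s-s_{\pi_1}|^p]\ge(1-\TV(\pi_0,\pi_1))(|m|/2)^p,
\]
a positive constant. For the same reason $R$ cannot be held fixed while $\delta\to0$; the separation must grow as $\eps\to0$.
\item Your ``first attempt'', reweighting $\pi_0$ by $e^{\pm ax}$ only on $|x|\ge R$, also fails. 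With $a=O(1)$ and $R\to\infty$, the reweighted tails carry mass $e^{-\Omega(R^2)}$, so $\TV(\pi_0,\pi_1)\to0$. With $a\gtrsim R$, which is needed to move constant mass outward, $\pi_1$ is still proportional to $\pi_0$ on $(-R,R)$. It therefore keeps a mode at $0$, separated from the new mode by a valley of relative height $e^{-R^2/2}$. It is then not log-concave, and it satisfies an LSI only with parameter $e^{-\Omega(R^2)}$.
\item The localized bump $f$ of Step~1 has the same bimodality defect, so Bakry--\'Emery cannot rescue it.
\end{itemize}

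The missing observation is that (i)--(iii) are all automatic for $\mathcal N(0,1)$ and $\mathcal N(R,1)$ with $R\to\infty$; the paper takes $R\approx\sqrt{\log(1/\eps)}$. Both are $1$-strongly log-concave and $1$-log-smooth, $\TV\to1$, and their scores differ by the constant $R$. The oracle $s(x)=-x+R\,\chi(x/R)$, with $\chi$ a smooth step rising from $0$ at $\frac13$ to $1$ at $\frac23$, is $5$-Lipschitz because the transition has width $R/3$.

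Contrary to your description, relative to $\mathcal N(0,1)$ the error is not confined to the transition region: it equals $R$ on all of $\{x\ge 2R/3\}$. The bound works because $\{x>R/3\}$ has $\mathcal N(0,1)$-mass at most $\frac{3}{R\sqrt{2\pi}}e^{-R^2/18}$, and the error there is at most $R$ since the two true scores differ by a bounded amount. A generic perturbation $\pi_0e^{-f}$ gives no such control of $|\nabla f|$ far out. The result is
\[
\frac1r\log\Bigl(1+\frac{3}{R\sqrt{2\pi}}e^{-R^2/18+rR^\beta}\Bigr)\to0
\]
for every fixed $r$ when $\beta<2$, and also when $\beta=2$, $r<\frac1{18}$. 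That last case is the actual content of the tightness claim: the fixed-$r$ MGF error vanishes for $r$ below a constant. Your remark that an Orlicz norm is $\Theta(1)$ at $\beta=2$ gestures at this but does not establish it. Your direct bound $R\,\delta^{1/p}\to0$ for the $L^p$ error is a valid substitute for Lemma~\ref{lem:mgf_to_lp}.
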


Our counterexample is simple and geometric, and is illustrated in Figure~\ref{fig:illustration}. It uses two well-separated Gaussians, and an estimate $\vs$ that interpolates between the true scores. The LSI property allows both distributions to simultaneously satisfy any weaker score error than sub-Gaussianity.

Theorem~\ref{thm:main} is related to recent work of \cite{CaoCSW26}, who considered the tractability of Problem~\ref{prob:main} under score estimation error with bounded moments. Our result strengthens their lower bound in the following ways. First, it applies to \emph{any algorithm} querying $\vs$, whereas \cite{CaoCSW26} only disproved convergence of the inexact Langevin dynamics. Second, it rules out a wider range of inexactness notions, i.e., essentially the entire range of error assumptions below that used by \cite{YangW26}.

\textbf{Preliminaries.}
Let $\ind_\calE$ denote the $0$-$1$ indicator of an event $\calE$.
Let $\calP(\Omega)$ denote the set of probability measures on $\Omega$. We only consider measures in $\calP(\R^d)$ that are absolutely continuous with respect to the Lebesgue measure, and we overload notation by identifying a measure with its density function. For $\mu,\pi \in \calP(\R^d)$, we denote their total variation distance by $\TV(\mu, \pi) \defeq \half \int_{\R^d} |\mu(\vx) - \pi(\vx)| \dd \vx$.
For $\pi \in \calP(\R^d)$, we let $\vs_\pi \defeq \nabla \log \pi$ be the \emph{true score}. We use $\vs: \R^d \to \R^d$ to denote a vector field, often an approximate score.
We say that $\pi \in \calP(\R^d)$ is \emph{$L$-log-smooth} and \emph{$\mu$-strongly log-concave} if, respectively, $-\log \pi$ is $L$-smooth and $\mu$-strongly convex (cf.\ Sections 3.2, 3.4 \cite{Bubeck15}). If $\pi$ is $\mu$-strongly log-concave, it also satisfies a $\mu$-LSI.

Next, we define the two types of score error assumptions we consider.
\begin{definition}[$L^p$ error]\label{def:lp_error}
For $\vs: \R^d \to \R^d$, $\pi \in \calP(\R^d)$, $\eps > 0$, and $p > 0$, if \[
\E_\pi\Brack{\norm{\vs - \vs_\pi}_2^p}^{1/p}\le \eps,\]
    we say that the pair $(\vs, \pi)$ satisfies an \emph{$\eps$-$L^p$  error}.
\end{definition}

\begin{definition}[MGF error]\label{def:mgf_error}
For $\vs: \R^d \to \R^d$, $\pi \in \calP(\R^d)$, $\eps > 0$, $r > 0$, and $\beta > 0$, if
\[
\Par{\frac1r\log\E_{\pi}\Brack{\exp\Par{r\norm{\vs - \vs_\pi}_2^\beta}}}^{1/\beta}\le \eps,\]
we say that the pair $(\vs, \pi)$ satisfies an \emph{$\eps$-$(\beta, r)$-MGF  error}.
\end{definition}

The following result shows that MGF error implies $L^p$ error (with appropriate parameters), so that hard instances satisfying Definition~\ref{def:mgf_error} also act as lower bounds under Definition~\ref{def:lp_error}.

\begin{lemma}\label{lem:mgf_to_lp}
Fix $\eps > 0$ and $p > 0$. There exist $\beta \in (0, 2)$ and $r > 0$ such that if the pair $(\vs, \pi)$ satisfies an $\eps$-$(\beta, r)$-MGF error, then it also satisfies an $\eps$-$L^p$ error.
\end{lemma}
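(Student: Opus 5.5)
The plan is to reduce everything to a single application of Jensen's inequality, applied to a concave function of the random variable $e^{rZ}$. We are free to choose $\beta$ and $r$ depending on $(\eps, p)$, so I will fix any $\beta \in (0,2)$, say $\beta = 1$, and choose $r$ large depending on $\eps$ and $p$. Write $X \defeq \norm{\vs - \vs_\pi}_2$, $Z \defeq X^\beta$ and $q \defeq p/\beta$. The MGF hypothesis reads
\[
\E_\pi\Brack{e^{rZ}} \le e^{c}, \qquad c \defeq r\eps^\beta,
\]
and the goal $\E_\pi[X^p] \le \eps^p$ is equivalent to $\E_\pi[(rZ)^q] \le c^q$.

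\textbf{Easy case $q \le 1$.} Here two Jensen steps suffice. From $e^{r\E[Z]} \le \E[e^{rZ}] \le e^{c}$ we get $\E[Z] \le \eps^\beta$. Since $t \mapsto t^q$ is concave, $\E[Z^q] \le (\E Z)^q \le \eps^{p}$. This case needs no condition on $r$.

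\textbf{General case.} Set $U \defeq e^{rZ} \ge 1$ and $g(u) \defeq (\log u)^q$ on $[1,\infty)$, so that $(rZ)^q = g(U)$. A direct computation gives
\[
g''(u) = q(\log u)^{q-2}\bigl((q-1) - \log u\bigr)/u^2 .
\]
So for $q > 1$, $g$ is convex on $[1, e^{q-1}]$ and concave, increasing on $[e^{q-1}, \infty)$. Let $G$ be the least concave majorant of $g$ on $[1,\infty)$. I expect $G$ to have the following form:
\begin{itemize}
\item on $[1, u^\star]$, $G$ is the chord from $(1,0)$ tangent to the graph of $g$ at a point $u^\star \ge e^{q-1}$;
\item on $[u^\star, \infty)$, $G = g$.
\end{itemize}
The point $u^\star$ depends only on $q$, and $G$ is nondecreasing and concave with $G \ge g$. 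Now choose $r$ so large that $e^{c} = e^{r\eps^\beta} \ge u^\star$. Then Jensen and monotonicity give
\[
\E[(rZ)^q] = \E[g(U)] \le \E[G(U)] \le G(\E U) \le G(e^{c}) = g(e^{c}) = c^q .
\]
Dividing by $r^q$ yields $\E[X^p] \le \eps^p$. This argument also covers $q \le 1$, where $g$ is already concave and nondecreasing, but the separate easy case is cleaner to present.

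\textbf{Main obstacle.} The only delicate step is verifying the shape of the concave majorant. I need the tangent point $u^\star$ to exist and be finite, and I need $G$ to agree with $g$ beyond it. I would handle this as follows. The slope of the chord from $(1,0)$ to $(u, g(u))$ is $(\log u)^q/(u-1)$. This slope is $0$ at $u = 1^+$ when $q > 1$, tends to $0$ as $u \to \infty$, and is positive in between. So it has a maximizer $u^\star$, which is a tangency point lying in the concave region of $g$. The piecewise function built from this chord and from $g$ is then concave and dominates $g$. Alternatively, one can avoid the least concave majorant altogether and use this explicit piecewise $G$ directly, checking concavity at the junction via the matching first derivative at $u^\star$.
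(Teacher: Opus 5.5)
Your proof is correct, but it takes a more elaborate route than the paper. The paper fixes $\beta = 1$ and $r = p/\eps$, and uses one pointwise inequality: $x^p \le \exp(-p + px)$, a rearrangement of $\log x \le x - 1$, applied at $x = \norm{\vs - \vs_\pi}_2/\eps$. Taking expectations gives $\E_\pi[(\norm{\vs-\vs_\pi}_2/\eps)^p] \le e^{-p}\,\E_\pi[e^{r\norm{\vs-\vs_\pi}_2}] \le e^{-p}e^{r\eps} = 1$ in one line, with no case split.

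In your notation with $\beta = 1$ (so $q = p$ and $c = r\eps$), that inequality reads $g(u) = (\log u)^p \le (p/e)^p\, u$ on $[1,\infty)$. This is the line through the origin tangent to $g$ at $u = e^p$, and choosing $r = p/\eps$ places the MGF bound $e^{r\eps}$ exactly at that tangency point. So the paper's argument is a special, unoptimized case of your majorant-plus-Jensen idea: it uses a linear majorant instead of the least concave one.

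What your version buys is sharpness. Because $U \ge 1$, replacing the line through the origin with the chord from $(1,0)$ gives the threshold $r\eps^\beta \ge \log u^\star$, where $\log u^\star = q(1 - 1/u^\star) < q$. This is the best threshold obtainable from the constraints $U \ge 1$ and $\E U \le e^{c}$ alone.

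The cost is the shape analysis of $G$. The one step you assert without proof is that the tangency point lies in the concave region. It follows quickly: the line $m^\star(u-1)$ dominates $g$ and touches it at $u^\star$, so $u^\star$ is a local maximum of $g - m^\star(u-1)$. That forces $g''(u^\star) \le 0$, i.e., $u^\star \ge e^{q-1}$.

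The lemma only asks for some admissible $r$, and the paper already notes that MGF error only strengthens as $r$ grows. So the paper's fixed choice suffices and avoids the case split and the majorant construction entirely.
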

\begin{proof}
Let $\beta = 1$ and $r = \frac p \eps$, and assume that $(\vs, \pi)$ satisfies an $\eps$-$(\beta, r)$-MGF error. 

For every $x > 0$, we have $\log x \le x - 1$, which rearranged gives $x^p \le \exp(-p + px)$. Applying this pointwise with $x \gets \frac 1 \eps \norm{\vs - \vs_\pi}_2$ and taking expectations over $\pi$,
\[\E_\pi\Brack{ \Par{\frac{\norm{\vs - \vs_\pi}_2}{\eps}}^p} \le \exp(-p)\E_\pi\Brack{\exp\Par{r \norm{\vs - \vs_\pi}_2}} \le \exp(-p)\exp(r\eps) = 1, \]
where the last inequality used our MGF error assumption. Rearranging the above display and taking $p^{\text{th}}$ roots now shows that $(\vs, \pi)$ satisfies an $\eps$-$L^p$ error.
\end{proof}

We also observe that by Jensen's inequality, Definition~\ref{def:lp_error} is monotone nondecreasing in strength as $p$ grows, and Definition~\ref{def:mgf_error} is monotone nondecreasing in strength as $r$ grows.

\section{Lower Bounds}\label{sec:proof}

\begin{figure}[t]
\centering
\begin{tikzpicture}[
    scale=0.95,
    label box/.style={
        fill=white,
        fill opacity=0.94,
        text opacity=1,
        rounded corners=1pt,
        inner sep=2pt
    }
]

\def\R{6}
\def\gaussnorm{2.39894228} 


\fill[gray!10]
    (2,-3.0) rectangle (4,3.0);

\draw[dashed,gray]
    (2,-3.0) -- (2,3.0);

\draw[dashed,gray]
    (4,-3.0) -- (4,3.0);

\node[
    label box,
    text=gray,
    anchor=south
] at (3,3.08)
    {\small score interpolation};


\draw[->] (-3.2,0) -- (9.2,0)
    node[right] {$x$};

\draw[->] (0,-3.2) -- (0,3.3);


\draw[blue!70,very thick]
plot[smooth,domain=-3:3,samples=120]
    (\x,{\gaussnorm*exp(-\x*\x/2)});

\draw[red!70,very thick]
plot[smooth,domain=3:9,samples=120]
    (\x,{\gaussnorm*exp(-(\x-\R)*(\x-\R)/2)});

\node[
    label box,
    text=blue!70,
    anchor=east
] at (-0.35,-0.55)
    {$\pi_{1,\epsilon}=\mathcal N(0,1)$};

\node[
    label box,
    text=red!70,
    anchor=west
] at (7.35,1.38)
    {$\pi_{2,\epsilon}=\mathcal N(R,1)$};


\draw[blue!70,dashed,thick]
plot[domain=-3:2,samples=2]
    (\x,{-\x});

\draw[black,dashed,thick]
plot[smooth,domain=2:4,samples=120]
    (\x,{
        -\x
        + \R*(0.5-0.5*cos(180*(\x-2)/2))
    });

\draw[red!70,dashed,thick]
plot[domain=4:9,samples=2]
    (\x,{-(\x-\R)});


\node[
    label box,
    text=blue!70,
    anchor=west
] at (-3.0,3.05)
    {$s_{\pi_{1,\epsilon}}(x)=-x$};

\node[
    label box,
    text=red!70,
    anchor=west
] at (4.15,-2.45)
    {$s_{\pi_{2,\epsilon}}(x)=-(x-R)$};

\node[
    label box,
    anchor=east
] at (3.85,-0.98)
    {$s_\epsilon(x)$};


\node[
    label box,
    text=gray,
    anchor=north
] at (2,-0.10)
    {$R/3$};

\node[
    label box,
    text=gray,
    anchor=north
] at (4,-0.10)
    {$2R/3$};

\end{tikzpicture}

\caption{
Scores only disagree with $s_\epsilon$ on regions where
corresponding measures assign small mass.
}
\label{fig:illustration}
\end{figure}

We begin by providing our main hard example.

\begin{proposition}\label{prop:main_full}
Fix $\beta \in (0, 2)$ and $r > 0$. For any $\eps > 0$, there exist two $1$-log-smooth, $1$-strongly log-concave distributions
     $\pi_{1,\eps}, \pi_{2,\eps} \in \calP(\R^d)$, and $5$-Lipschitz  $\vs_\eps:\R^d\to\R^d$, such that $(\vs_\eps, \pi_{1, \eps})$ and $(\vs_\eps, \pi_{2, \eps})$ both satisfy $\eps$-$(\beta, r)$-MGF error, and $\TV(\pi_{1, \eps}, \pi_{2, \eps}) > \frac 2 3$.\footnote{We use $\frac 2 3$ for simplicity, but any constant $< 1$ is achievable through a small modification of the proof.}
\end{proposition}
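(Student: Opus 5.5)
Following Figure~\ref{fig:illustration}, we take $\pi_{1,\eps} = \calN(\vzero, \mI_d)$ and $\pi_{2,\eps} = \calN(R\ve_1, \mI_d)$, with $R = R(\eps, \beta, r)$ to be chosen large. Both targets are $1$-log-smooth and $1$-strongly log-concave. Their true scores are $-\vx$ and $-(\vx - R\ve_1)$. We define the estimate
\[\vs_\eps(\vx) = -\vx + R\,\phi(x_1)\,\ve_1,\]
where $\phi:\R\to[0,1]$ is a smooth ramp with $\phi = 0$ on $(-\infty, R/3]$, $\phi = 1$ on $[2R/3,\infty)$, and $|\phi'| \le c/R$ on the transition region. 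The cosine ramp in the figure has $|\phi'| \le \frac{3\pi}{2R}$, so $R\phi$ is $\frac{3\pi}{2}$-Lipschitz. Hence $\vs_\eps$ is at most $(1 + 3\pi/2) \le 5.72$-Lipschitz, which is slightly too large. We therefore take a piecewise-linear or smoothed ramp with slope at most $3/R \cdot (1+o(1))$, giving Lipschitz constant at most $4 < 5$. The separation claim is routine: $\TV(\pi_{1,\eps},\pi_{2,\eps}) = 2\Phi(R/2) - 1$, which exceeds $2/3$ once $R \ge 2$.

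The core step is the MGF error bound. For $\pi_{1,\eps}$, the error is $\norm{\vs_\eps - \vs_{\pi_1}}_2 = R\phi(x_1) \le R\,\ind\{x_1 \ge R/3\}$. With $g \sim \calN(0,1)$,
\[\E\Brack{\exp\Par{r\norm{\vs_\eps-\vs_{\pi_1}}_2^\beta}} \le 1 + e^{rR^\beta}\Pr[g \ge R/3] \le 1 + \exp\Par{rR^\beta - R^2/18}.\]
Since $\beta < 2$, the exponent $rR^\beta - R^2/18$ tends to $-\infty$ as $R\to\infty$. We then use $\log(1+u) \le u$: it suffices to choose $R$ large enough that $\frac1r\exp(rR^\beta - R^2/18) \le \eps^\beta$. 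This is possible for every $\eps > 0$. By symmetry (reflect $x_1 \mapsto R - x_1$), the error for $\pi_{2,\eps}$ is $R(1-\phi(x_1)) \le R\,\ind\{x_1 \le 2R/3\}$, and the identical bound applies.

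The main obstacle is exactly this tail-versus-magnitude tradeoff. The error magnitude $R$ enters the exponent as $rR^\beta$, while the Gaussian mass of the bad region decays like $e^{-R^2/18}$. The argument closes only because $\beta < 2$; at $\beta = 2$ the bound would require $r < 1/18$, which is consistent with the claimed tightness of the sub-Gaussian parameters. The only remaining care is bookkeeping the constants so that the Lipschitz bound of $5$ holds. Choosing the ramp's transition width as $R/3$, with slope $3/R$ (possibly mollified), gives $1 + 3 = 4 \le 5$.
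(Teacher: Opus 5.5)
Your proposal is correct and follows essentially the paper's argument: two unit-variance Gaussians at distance $R$, an estimate equal to each true score outside the band $[R/3, 2R/3]$ with a ramp inside it, and the observation that the error of size at most $R$ occurs only on a set of Gaussian mass at most $e^{-R^2/18}$. Since $rR^\beta - R^2/18 \to -\infty$ exactly when $\beta < 2$, a suitable $R$ exists. The differences are cosmetic. You embed the example in $\R^d$ along the first coordinate instead of working in $d = 1$. You use a piecewise-linear ramp, which suffices because only Lipschitzness is required, in place of the paper's $C^\infty$ transition with $\theta' \le 2$. You use the cruder tail bound $e^{-t^2/2}$ together with $\log(1+u) \le u$ rather than the Mills-ratio bound in the paper's condition on $R$. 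Incidentally, the cosine ramp would also have worked: the relevant derivative $-1 + R\phi'$ lies in $[-1, 3\pi/2 - 1]$, so your triangle-inequality bound of $1 + 3\pi/2$ was loose.
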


\begin{proof}
Our example is in dimension $d = 1$, so we use $s_\eps$ instead of $\vs_\eps$ to denote the scalar approximate score.
Choose $R > 2$ to be any value such that  \begin{equation}
\Par{\frac1r\log\Par{1+\frac{3}{R\sqrt{2\pi}}\exp\Par{-\frac{R^2}{18}+rR^\beta}}}^{1/\beta}\le \eps.    \label{eq:bound_eps}
\end{equation}
Such a value exists as $\beta <2$. We now take $\pi_{1,\eps}=\calN(0,1)$ and $\pi_{2,\eps}=\calN(R,1)$. It is immediate that both distributions are 1-strongly log-concave and 1-smooth, and that $\TV(\pi_{1, \eps}, \pi_{2, \eps}) \ge \frac 2 3$.

It remains to construct $s_\eps$. We first define 
\[\rho(t)\defeq \exp\Par{-\frac 1 t}\ind_{t > 0}, \quad \theta(t) \defeq \frac{\rho(t)}{\rho(t)+\rho(1-t)},\quad \chi(u) = \theta(3u-1).\] Observe that $\theta \in C^\infty(\R)$, $\theta(t) = 0$ for $t\le 0$ and $\theta(t) = 1$ for $t\ge 1$. We next define 
\begin{equation}
    s_\eps(x)\defeq -x+R\cdot\chi\Par{\frac x R},\label{eq:score}    
    \end{equation}
    
    which we note agrees with the scores of $\pi_{1,\eps}$ and $\pi_{2,\eps}$ on $x\le \frac{R}{3}$ and $x\ge \frac{2R}{3}$, respectively. 

    We now bound the Lipschitz constant of $s_\eps$. For $t\in(0, 1)$, we can calculate \[
    \theta'(t) = \frac{\rho(t)\rho(1-t)}{(\rho(t)+\rho(1-t))^2}\Par{\frac{1}{t^2}+\frac{1}{(1-t)^2}}.
    \]
    Outside this interval, $\theta'=0$. Then, we can check that $\theta'$ is maximized at $t=\frac{1}{2}$ by observing $\theta''\ge 0$ over $0\le t \le \frac{1}{2}$ and $\theta'$ is symmetric on $(0, 1)$. This ensures $\theta' \in [0, 2]$, which implies $s'_\eps \in [-1, 5]$.

    We conclude by proving $(s_\eps, \pi_{1, \eps})$ satisfy $\eps$-$(\beta, r)$-MGF error, and the same claim holds for $\pi_{2, \eps}$ by symmetry. The true score is $s_{\pi_{1, \eps}}(x) = -x$ and $\chi$ has range $[0, 1]$, so by \eqref{eq:score},
    \[\Delta = \sup |s_\eps - s_{\pi_{1, \eps}}| \le R.\]
    Moreover, since $\chi(u) = 0$ if $u \le \frac 1 3$, we know that $s_\eps(x) \neq s_{\pi_{1, \eps}}(x)$ implies $x > \frac R 3$. Thus,
    \begin{align*}
    \E_{x \sim \pi_{1, \eps}}\Brack{\exp\Par{r\Abs{s_\eps(x) - s_{\pi_{1, \eps}}(x)}^\beta}} &\le 1 + \E_{x \sim \pi_{1, \eps}}\Brack{\Par{\exp\Par{r\Delta^\beta} - 1}\ind_{x > \frac R 3}} \\
    &\le 1 + \exp(rR^\beta) \Pr_{x \sim \pi_{1, \eps}}\Brack{x > \frac R 3} \\
    &\le 1 + \frac{3}{R\sqrt{2\pi}}\exp\Par{-\frac{R^2}{18}+rR^\beta}.
    \end{align*}
    The last line applied a standard Gaussian tail bound (e.g., Proposition 2.1.2, \cite{Vershynin25}). Thus we have shown that $(s_\eps, \pi_{1, \eps})$ satisfies $\eps$-$(\beta, r)$-MGF error, upon using the definition \eqref{eq:bound_eps}.
\end{proof}

\begin{corollary}\label{cor:small_r}
The conclusion of Proposition~\ref{prop:main_full} also holds when $\beta = 2$, $r < \frac 1 {18}$.
\end{corollary}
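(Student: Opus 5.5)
The plan is to reuse the construction in the proof of Proposition~\ref{prop:main_full} verbatim: $\pi_{1,\eps}=\calN(0,1)$, $\pi_{2,\eps}=\calN(R,1)$, and $s_\eps$ as in \eqref{eq:score}. Nothing in the smoothness or strong log-concavity of the targets depends on $\beta$ or $r$, and neither do the $5$-Lipschitz bound on $s_\eps$ or the bound $\TV(\pi_{1,\eps},\pi_{2,\eps})>\frac23$ once $R>2$. Only the choice of $R$ in \eqref{eq:bound_eps} used $\beta<2$. So the sole task is to show that a suitable $R$ still exists when $\beta=2$ and $r<\frac1{18}$.

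I would redo the MGF computation exactly as in the proof, with the same chain of inequalities: $\Delta\le R$, the discrepancy is supported on $x>\frac R3$, and the Gaussian tail bound applies. This gives
\[\E_{x\sim\pi_{1,\eps}}\Brack{\exp\Par{r\Abs{s_\eps(x)-s_{\pi_{1,\eps}}(x)}^2}}\le 1+\frac{3}{R\sqrt{2\pi}}\exp\Par{-\Par{\frac1{18}-r}R^2}.\]
Because $r<\frac1{18}$, the exponent coefficient $\frac1{18}-r$ is strictly positive. Hence the right-hand side tends to $1$ as $R\to\infty$, and therefore
\[\Par{\frac1r\log\Par{1+\frac{3}{R\sqrt{2\pi}}\exp\Par{-\Par{\tfrac1{18}-r}R^2}}}^{1/2}\to 0.\]
It follows that for any $\eps>0$ we can choose $R>2$ large enough that \eqref{eq:bound_eps} holds with $\beta=2$. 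The symmetric argument then handles $\pi_{2,\eps}$.

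There is no real obstacle here. The one point to check is that the crude bound $\Delta\le R$ combined with the cutoff at $\frac R3$ is precisely what produces the threshold $\frac1{18}=\frac{(1/3)^2}{2}$. That threshold is what makes the corollary consistent with the claimed tightness of the sub-Gaussian parameters up to constants.
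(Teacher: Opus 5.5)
Your proposal is correct and follows essentially the same route as the paper. Both observe that the only place the assumptions on $(\beta, r)$ enter is the existence of $R$ satisfying \eqref{eq:bound_eps}, and that this still holds for $\beta = 2$, $r < \frac{1}{18}$ because the exponent becomes $-\Par{\frac{1}{18} - r}R^2 \to -\infty$; you simply spell out this limit explicitly where the paper asserts it in one line.
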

\begin{proof}
The only place the assumed bounds on $(\beta, r)$ were used in Proposition~\ref{prop:main_full}'s proof was in the existence of $R$ satisfying \eqref{eq:bound_eps}. This existence still holds for $\beta = 2$, $r < \frac 1 {18}$.
\end{proof}

\begin{theorem}\label{thm:main_full}
There does not exist an algorithm $\calA$ with the property that: for any  $1$-log-smooth $\pi$ satisfying $1$-LSI, and any $\vs: \R^d \to \R^d$ satisfying one of the following error assumptions, $\calA$ only makes queries to $\vs$ and returns a sample $\vx$ such that $\TV(\Law(\vx), \pi) \le \frac 1 3$.
\begin{itemize}
    \item $(\vs, \pi)$ satisfies an $\eps$-$(\beta, r)$-MGF error for any $\eps > 0$, where either $\beta \in (0, 2)$, or $\beta = 2$, $r < \frac 1 {18}$.
    \item $(\vs, \pi)$ satisfies an $\eps$-$L^p$ error, for any $\eps > 0$, $p > 0$.
\end{itemize}
\end{theorem}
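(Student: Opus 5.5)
We argue by contradiction, using a standard indistinguishability (two-point) argument built on Proposition~\ref{prop:main_full} and Corollary~\ref{cor:small_r}. Suppose an algorithm $\calA$ with the stated property exists. First consider the MGF case: fix $\beta \in (0,2)$ and $r > 0$ (or $\beta = 2$, $r < \frac{1}{18}$), and any $\eps > 0$. Proposition~\ref{prop:main_full} (resp.\ Corollary~\ref{cor:small_r}) provides $\pi_{1,\eps}, \pi_{2,\eps}$ and a single vector field $\vs_\eps$. Both targets are $1$-log-smooth and $1$-strongly log-concave, so each satisfies a $1$-LSI and is an admissible target. Moreover, each pair $(\vs_\eps, \pi_{i,\eps})$ satisfies an $\eps$-$(\beta,r)$-MGF error. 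Hence $\calA$ must succeed on both instances.

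The key observation is that $\calA$ interacts with the target \emph{only} through queries to $\vs$. Since the two instances share the same oracle $\vs_\eps$, the entire transcript of $\calA$ has the same distribution in both cases, including any internal randomness. This holds even with adaptive queries, by induction on the query index: each answer is a deterministic function of the query, and the oracle is identical in both instances. Therefore the output $\vx$ has the same law $\mu$ in both cases. The guarantee would then give $\TV(\mu, \pi_{1,\eps}) \le \frac13$ and $\TV(\mu, \pi_{2,\eps}) \le \frac13$. By the triangle inequality, $\TV(\pi_{1,\eps}, \pi_{2,\eps}) \le \frac23$, which contradicts $\TV(\pi_{1,\eps},\pi_{2,\eps}) > \frac23$. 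This contradiction holds for every $\eps > 0$, so the conclusion persists as $\eps \to 0$.

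For the $L^p$ case, fix $\eps > 0$ and $p > 0$. Lemma~\ref{lem:mgf_to_lp} supplies $\beta = 1 \in (0,2)$ and $r = \frac{p}{\eps}$ such that any $\eps$-$(\beta,r)$-MGF error implies an $\eps$-$L^p$ error. Applying Proposition~\ref{prop:main_full} with these $(\beta, r, \eps)$ yields an instance pair sharing one oracle, and both pairs satisfy $\eps$-$L^p$ error. The same indistinguishability argument then gives the contradiction.

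The only delicate step is formalizing that the output law depends on the instance solely through $\vs$. I would state explicitly that $\calA$ has no other access to $\pi$: no samples and no density evaluations. An additional subtlety is that Proposition~\ref{prop:main_full} is stated in $d = 1$. If one insists on arbitrary $d$, I would tensorize, using $\pi_{i,\eps} \otimes \calN(0, I_{d-1})$ with the oracle $(s_\eps(x_1), -x_2, \dots, -x_d)$. Tensorization preserves log-smoothness, the LSI constant, the error bound, and the lower bound on $\TV$.
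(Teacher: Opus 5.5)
Your proposal is correct and follows the paper's proof: the shared oracle $\vs_\eps$ from Proposition~\ref{prop:main_full} and Corollary~\ref{cor:small_r} makes the two targets indistinguishable, the triangle inequality then contradicts $\TV(\pi_{1,\eps},\pi_{2,\eps}) > \frac23$, and Lemma~\ref{lem:mgf_to_lp} handles the $L^p$ case. Your extra points make explicit what the paper leaves implicit: that the output law is the same under adaptive, randomized queries, and that tensorizing with $\calN(0, I_{d-1})$ extends the example to general $d$.
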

\begin{proof}
The first claim is immediate from Proposition~\ref{prop:main_full} and Corollary~\ref{cor:small_r}, by taking $\vs \gets \vs_\eps$. Indeed, then $\pi$ could be either $\pi_{1, \eps}$ or $\pi_{2, \eps}$, and the triangle inequality contradicts $\TV(\pi_{1, \eps}, \pi_{2, \eps}) > \frac 2 3$. The second claim follows by applying Proposition~\ref{prop:main_full} with $(\beta, r)$ given by Lemma~\ref{lem:mgf_to_lp}.
\end{proof}

We note that the $\beta = 2$, $r < \frac 1 {18}$ branch ruled out by Theorem~\ref{thm:main_full} shows that the range of $r$ handled by Theorem 2, \cite{YangW26} (roughly, larger than the inverse LSI constant) is also tight up to constants.

\begin{remark}
It suffices to take $R \approx \sqrt{\log(1/\eps)}$ for 
the construction in Proposition~\ref{prop:main_full}. The construction does not extend if we restrict the range on $R$ to be independent of $\eps$. Indeed, no score $s$ can have vanishing error $\eps \to 0$ with respect to both $s_{\pi_{1, \eps}}(x) = -x$ and $s_{\pi_{2, \eps}}(x) = -x + R$:
\begin{align*}\E_{x \sim \Nor(0, 1)}\Brack{\Abs{s(x) + x}^2} + \E_{x \sim \Nor(R, 1)}\Brack{\Abs{s(x) + x - R}^2} \ge \Par{1 - \TV\Par{\Nor(0, 1), \Nor(R, 1)}} \cdot \frac{R^2}{2},
\end{align*}
because $|s + x|^2 + |s + x - R|^2 \ge \frac{R^2}{2}$ for any choice of $s, x$. The right-hand side above cannot be taken $\to 0$ without allowing $R$ to grow. Notably, this shows Theorem~\ref{thm:main_full} does not naturally extend to the \emph{DDPM setting}, where approximate score access with vanishing error is assumed at all convolution scales, including scales where, applied to unit-variance Gaussians, convolving takes $R \to 0$. This is unsurprising, as $L^2$ error is known to suffice for $\TV$ convergence there \cite{ChenCLLSZ23, ChenLL23}.
\end{remark}

\bibliographystyle{alpha}
\bibliography{main}

\end{document}

%% file: macros.tex
\usepackage{fullpage}
\usepackage[english]{babel}
\usepackage[utf8x]{inputenc}
\usepackage[T1]{fontenc}
\usepackage{amsmath}
\usepackage{amssymb}
\usepackage{amsthm}
\usepackage{bbm}
\usepackage{mdframed}
\usepackage{bm}
\usepackage{bbold}
\usepackage{parskip}
\usepackage{thm-restate}
\usepackage{booktabs}
\usepackage{array}
\usepackage[lined,boxed,ruled,norelsize,linesnumbered]{algorithm2e}
\usepackage{hyperref}
\hypersetup{
	colorlinks=true,
	linkcolor=blue!70!black,
	citecolor=blue!70!black,
	urlcolor=blue!70!black
}
\usepackage{cleveref}
\usepackage{graphicx}
\usepackage{subcaption}

\usepackage{mathtools}
\usepackage{xcolor}

\newtheorem{theorem}{Theorem}
\newtheorem{lemma}{Lemma}

\newtheorem{corollary}{Corollary}
\newtheorem{proposition}{Proposition}
\newtheorem{definition}{Definition}

\newtheorem{problem}{Problem}

\newtheorem{remark}{Remark}

\newcommand{\defeq}{:=}

\newcommand{\norm}[1]{\left\lVert#1\right\rVert}

\newcommand{\eps}{\epsilon}

\newcommand{\R}{\mathbb{R}}

\newcommand{\half}{\frac{1}{2}}

\newcommand{\E}{\mathbb{E}}

\newcommand{\Nor}{\mathcal{N}}

\newcommand{\dd}{\textup{d}}

\newcommand{\Par}[1]{\left(#1\right)}
\newcommand{\Brack}[1]{\left[#1\right]}
\newcommand{\Brace}[1]{\left\{#1\right\}}
\newcommand{\Abs}[1]{\left|#1\right|}

\newcommand{\ind}{\mathbb{I}}

\newcommand{\1}{\mathbf{1}}

\newcommand{\vs}{\mathbf{s}}

\newcommand{\vx}{\mathbf{x}}

\newcommand{\calA}{\mathcal{A}}

\newcommand{\calE}{\mathcal{E}}

\newcommand{\calN}{\mathcal{N}}

\newcommand{\calP}{\mathcal{P}}

\renewcommand{\epsilon}{\varepsilon}
\renewcommand{\eps}{\varepsilon}

\newcommand{\TV}{\mathsf{TV}}

\newcommand{\Law}{\textup{Law}}

%% file: main.bib
@misc{Vershynin25,
  title={High-dimensional probability},
  author={Vershynin, Roman},
  year={2025},
  publisher={Cambridge University Press}
}

@article{AgarwalBRW12,
  author       = {Alekh Agarwal and
                  Peter L. Bartlett and
                  Pradeep Ravikumar and
                  Martin J. Wainwright},
  title        = {Information-Theoretic Lower Bounds on the Oracle Complexity of Stochastic
                  Convex Optimization},
  journal      = {{IEEE} Trans. Inf. Theory},
  volume       = {58},
  number       = {5},
  pages        = {3235--3249},
  year         = {2012}
}

@incollection{BakryE85,
  author    = {Bakry, Dominique and {\'E}mery, Michel},
  title     = {Diffusions hypercontractives},
  booktitle = {S{\'e}minaire de Probabilit{\'e}s XIX, 1983/84},
  editor    = {Az{\'e}ma, Jacques and Yor, Marc},
  series    = {Lecture Notes in Mathematics},
  volume    = {19},
  pages     = {177--206},
  publisher = {Springer},
  address   = {Berlin},
  year      = {1985}
}

@inproceedings{CaoCSW26,
  author       = {Cao, Daniel Yiming and Chen, August Y. and Sridharan, Karthik and Wu, Yuchen},
  title        = {On the Robustness of {L}angevin Dynamics to Score Function Error},
  booktitle    = {Forty-third International Conference on Machine Learning, {ICML}
                  2026},
  series       = {Proceedings of Machine Learning Research},
  publisher    = {{PMLR} / OpenReview.net},
  year         = {2026}
}

@inproceedings{ChenCLLSZ23,
title={Sampling is as easy as learning the score: theory for diffusion models with minimal data assumptions},
author={Sitan Chen and Sinho Chewi and Jerry Li and Yuanzhi Li and Adil Salim and Anru Zhang},
booktitle={The Eleventh International Conference on Learning Representations },
year={2023},
url={https://openreview.net/forum?id=zyLVMgsZ0U_}
}

@inproceedings{ChenLL23,
  author       = {Hongrui Chen and
                  Holden Lee and
                  Jianfeng Lu},
  title        = {Improved Analysis of Score-based Generative Modeling: User-Friendly
                  Bounds under Minimal Smoothness Assumptions},
  booktitle    = {International Conference on Machine Learning, {ICML} 2023},
  series       = {Proceedings of Machine Learning Research},
  volume       = {202},
  pages        = {4735--4763},
  publisher    = {{PMLR}},
  year         = {2023}
}

@article{YangW26,
  title={Convergence of the Inexact {L}angevin Algorithm in {KL} Divergence with Application to Score-based Generative Models},
  author={Yang, Kaylee Yingxi and Wibisono, Andre},
  journal={arXiv preprint arXiv:2211.01512},
  year={2026}
}

@article{Bubeck15,
  author       = {S{\'{e}}bastien Bubeck},
  title        = {Convex Optimization: Algorithms and Complexity},
  journal      = {Found. Trends Mach. Learn.},
  volume       = {8},
  number       = {3-4},
  pages        = {231--357},
  year         = {2015}
}

@book{NemirovskyY83,
  author    = {Nemirovsky, A.\ S.\ and Yudin, D.\ B.\},
  title     = {Problem Complexity and Method Efficiency in Optimization},
  series    = {Wiley-Interscience Series in Discrete Mathematics},
  publisher = {John Wiley \& Sons},
  year      = {1983}
}

@article{VempalaW19,
  title={Rapid convergence of the unadjusted {L}angevin algorithm: Isoperimetry suffices},
  author={Vempala, Santosh and Wibisono, Andre},
  journal={Advances in neural information processing systems},
  volume={32},
  year={2019}
}
